\documentclass[11pt]{article}
\pdfoutput=1
\usepackage[margin=1in]{geometry}
\usepackage{amsmath,amssymb,amsthm}
\usepackage{booktabs}
\usepackage{multirow}
\usepackage{graphicx}
\usepackage[numbers]{natbib}
\usepackage[colorlinks=true,linkcolor=blue,citecolor=blue,urlcolor=blue]{hyperref}
\usepackage{xcolor}
\usepackage{placeins}
\usepackage{pifont}
\usepackage{tikz}
\usetikzlibrary{positioning,arrows.meta}
\hypersetup{
  pdftitle={Anatomy of a Sound Neural Reasoner: One-Shot Amortization,
            First-Pass Poisoning, and Search Inertness in Clue-Rich Completion},
  pdfauthor={Aleksey Komissarov}
}
\newcommand{\colt}{\textsc{CoLT}}
\newcommand{\meet}{\sqcap}
\newcommand{\bottom}{\bot}

\newtheorem{observation}{Observation}
\newtheorem{proposition}{Proposition}

\title{Anatomy of a Sound Neural Reasoner:\\ One-Shot Amortization, First-Pass Poisoning,\\ and Search Inertness in Clue-Rich Completion}

\author{
  Aleksey Komissarov\thanks{Corresponding author: \texttt{a.komissarov@nup.ac.cy}}\\
  Neapolis University, Pafos, Cyprus\\
  \\
  \small Code, frozen protocol, and all raw results:
  \url{https://github.com/ad3002/colt/tree/r14-2026-07-15}
}
\date{July 2026}

\begin{document}
\maketitle

\begin{abstract}
Neural solvers are built to deduce, branch, and revise intermediate states. The Lattice
Deduction Transformer (LDT) appears to do exactly that. In clue-rich Sudoku, it does not:
one forward pass commits essentially the entire grid (every blank cell on standard
6$\times$6, 94--96\% on augmented 9$\times$9), turning the iterative solver into a
one-shot
predictor wrapped in an exact verifier. All hard-slice failures are decided before
search begins, when the first pass confidently deletes a value required by the true
solution. We call this \emph{first-pass poisoning}. Adding learned branching, MRV,
backtracking, value exclusion, and shared nogoods (\colt{}) does not change which Sudoku
instances are solved; it cuts repeated invalid derivations \textbf{1{,}497-fold}. At the frozen training budget, constraint-graph attention alone matches full-\colt{}
accuracy, while positional tables recover only under substantially longer training: an
optimization and sample-efficiency advantage, not an absolute capacity difference. The diagnosis predicts two
cures, and both work. Digit-permutation augmentation raises 9$\times$9 accuracy from below 1\% to
\textbf{96.5\% $\pm$ 0.3} across three training seeds on a symmetry-disjoint split. Test-time union over symmetry-transformed
passes raises all three hard-slice checkpoints from 72.8--78.9\% to \textbf{100\%}
without retraining. On
from-scratch graph coloring, one-shot behavior disappears and search changes accuracy. In
clue-rich completion, LDT-like systems are one-shot amortized predictors rather than
learned search procedures: accuracy is set by calibration and symmetry; search mainly
removes computational waste.
\end{abstract}

\section{Introduction}

Two May-2026 architectures attack small-model reasoning from opposite ends. GRAM
\citep{baek2026gram} takes the \emph{proposal} view: a deterministic recursive reasoner in the
HRM/TRM lineage \citep{wang2025hrm,jolicoeur2025trm} is wrapped in a variationally trained
stochastic guidance term, so the latent trajectory explores; auxiliary heads rank and halt
trajectories. Nothing, however, prevents a confidently wrong answer. The Lattice Deduction
Transformer \citep{davis2026ldt} takes the \emph{soundness} view: the state is an explicit
lattice of per-cell candidate sets; the network only ever \emph{removes} candidates; conflicts
($\bottom$) are detected, branches are committed, and a completed grid is emitted only after an
external constraint check; otherwise the model abstains. Throughout this paper
``soundness'' means this \emph{verified-emission} property: no emitted answer fails
the exact external check. It is narrower than soundness in the formal-methods sense: the
neural conflict signal is heuristic evidence rather than proof, value exclusion can prune
true paths, completeness is not guaranteed, and the system may abstain; what is guaranteed
is the absence of false emitted solutions. LDT reports 100\%/100\%
accuracy/soundness on Sudoku-Extreme, where frontier LLMs score 0\%.

We reimplemented both systems from their papers (ad3002/gram, ad3002/LTD;
\citealp{komissarov2026gram_reimpl,komissarov2026ltd_reimpl}) and measured their failure modes
on shared data. What decided the direction of this paper is where LDT's search compute goes.
On our 4$\times$4 reproduction, the inference solver suppressed \textbf{35{,}200}
constraint-violating completions, concentrated on the 11/300 puzzles it failed: the same
wrong grids re-derived hundreds of times each. At 6$\times$6 the count was \textbf{148{,}567}.
The soundness guard holds; the search spends almost all of its budget re-deriving known
dead ends. Structurally this is no surprise: branch cells are picked uniformly at random,
a conflict discards the whole trajectory (restart from the initial state), and the $K$
parallel chains share no memory.

\begin{observation}[Soundness licenses the search layer]
\label{obs:licensing}
In a verify-or-abstain system, soundness is a property of the emission rule alone: an answer
is emitted only after passing an exact external check. Consequently every search policy,
whether learned, heuristic, or adversarial, preserves the guarantee, and the search layer is
a free design space.
\end{observation}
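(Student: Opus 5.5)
The plan is to formalize a verify-or-abstain system as a composition $S = E \circ P$. The search layer $P$ is an arbitrary, possibly randomized or adversarial, procedure that maps an instance $x$ to a finite (possibly empty) multiset of candidate completions. The emission rule $E$ applies the exact external checker $C(x,y) \in \{0,1\}$ to each candidate. It outputs the first $y$ with $C(x,y)=1$, or abstains (outputs $\bottom$) if no such $y$ exists. Soundness in the paper's verified-emission sense is the statement
\[
\forall x,\ \forall \text{runs of } S:\quad S(x) \neq \bottom \implies C(x, S(x)) = 1 .
\]
I would state this definition explicitly first, because the observation is really a claim about where the quantifier over search policies sits.

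The key step is a one-line argument by case split on the output of $E$. Fix any instance $x$, any policy $P$, and any realization of its internal randomness, so that $P(x)$ is a fixed finite list $y_1,\dots,y_m$. If $E$ abstains, there is nothing to check. Otherwise $E$ returns some $y_i$, and by construction of $E$ this only happens when $C(x,y_i)=1$. Hence $C(x,S(x))=1$. The argument never uses any property of $P$ beyond the fact that it halts with a finite list; it does not use how $y_1,\dots,y_m$ were produced. Quantifying universally over $P$ then gives the claim that every search policy (learned branching, MRV, backtracking, nogoods, random or adversarial choices) preserves the guarantee. The ``free design space'' conclusion is simply a restatement: the set of policies compatible with soundness is the set of all policies.

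I would add two remarks to make the scope precise, matching the caveats in the introduction. First, the guarantee depends on $C$ being exact and on $P$ having no write access to $C$ or to the emission rule. An adversarial policy that could alter the checker is excluded by the modeling assumption that $E$ is fixed and external. Second, only soundness is preserved, not completeness or accuracy. A policy that prunes the true solution, such as aggressive value exclusion, or one that never terminates, is handled by abstention or a budget cutoff. To cover non-termination, I would have $E$ abstain at a compute budget.

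The only real obstacle is definitional rather than technical. The proof is trivial once the interface between $P$ and $E$ is pinned down, so the work lies in justifying that the actual LDT/\colt{} implementation factors this way. Concretely, the neural $\bottom$-signal, the nogood store, and any early stopping must live entirely inside $P$, and the emitted grid must always pass through the exact Sudoku or coloring checker. I would verify this by pointing to the single emission site in the solver and noting that every code path to output goes through it.
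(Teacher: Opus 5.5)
Your proposal is correct and matches the paper's treatment. The paper gives no separate proof and classifies the observation as definitional (Appendix~\ref{app:audit}), since soundness follows at once from the verify-gated emission rule. That is exactly your case split on the emission step, with the search layer left unconstrained. Your remaining concern, confirming that every output path in the implementation passes through the exact checker, is the part the paper supports empirically rather than formally. It does so with unit tests on the emission gate and the clean-room verifier audit on 25{,}231 cases.
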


Classical SAT/CSP solving tells us what to put in that free space: conflict-driven
clause learning, backjumping, and informed variable ordering turned satisfiability solving
from hopeless to routine \citep{silva1999grasp,moskewicz2001chaff}. \colt{} instantiates a
CDCL-\emph{inspired} subset of that playbook neurally, inside the sound envelope
(Table~\ref{tab:correspondence}): informed ordering, chronological backtracking, and
verified leaf-level nogoods, but not conflict analysis, which a black-box propagator cannot
support (Limitations~iii). The
hypothesis under test is that this search layer is where the missing accuracy lives. In the
clue-rich Sudoku regimes studied here, the rest of the paper refutes that hypothesis:
accuracy is set by the calibration of the propagator's first forward pass. The boundary
study (\S\ref{sec:boundary}) maps where this stops holding.

\begin{table}[h]
\centering\small
\begin{tabular}{ll}
\toprule
\textbf{Classical CDCL solver} & \textbf{\colt{}} \\
\midrule
unit propagation & recurrent transformer over the lattice (from LDT) \\
variable ordering (MRV/VSIDS) & \textbf{policy head} trained on branch-survival probability \\
value ordering & branch distribution $\mathrm{softmax}(b/\tau)$ (from LDT) \\
conflict $\Rightarrow$ backtrack & \textbf{decision-stack DFS with value exclusion} \\
learned clauses (nogoods) & \textbf{per-puzzle ban set} of verified-wrong completions \\
restarts & stack-exhaustion restart with sampled orderings \\
state heuristics & \textbf{value function} $V(x) = 1 - \sigma(\text{conflict head})$ \\
\bottomrule
\end{tabular}
\caption{The CDCL-inspired correspondence. Bold components are new in \colt{}; the rest is
inherited from LDT \citep{davis2026ldt}. GRAM's \citep{baek2026gram} contribution is the
perspective that search decisions are a trainable distribution, not a fixed heuristic. The
correspondence is architectural, not complete: \colt{} has no conflict \emph{analysis}, so
its backtracking is chronological and its learned nogoods are whole verified-wrong grids
rather than general clauses (Limitations~iii).}
\label{tab:correspondence}
\end{table}

\paragraph{Contributions.} We built \colt{}'s search layer deliberately strong so that
its measured effect on accuracy would settle the hypothesis in either direction.
\begin{enumerate}
  \item \textbf{A mechanistic anatomy of where sound lattice reasoning works and why it
  fails} (\S\ref{sec:anatomy}--\ref{sec:amortized}, \S\ref{sec:phase4}): a single forward
  pass commits all blank cells to singletons (rate 1.0), so in the clue-rich regimes
  tested these systems are one-shot
  amortized solvers inside a verifier shell. One-shot behavior is itself the optimum of
  unique-solution supervision (\S\ref{sec:amortized}); the contribution is measuring how
  completely it is attained, tying failures to poisoning, and confirming the predicted
  interventions; a perfect, threshold-insensitive contingency
  (43/43 vs.\ 0/137, replicated at 42/42 vs.\ 0/138 on different hardware) attributes
  all 43 failures to first-pass poisoning; and a symmetry
  argument (Proposition~\ref{prop:symmetry}) shows the exact depth-0 target contains no
  elimination signal. Two predicted cures both confirm: train-time digit-permutation augmentation
  (9$\times$9/25-clue, below 1\% to 96.5 $\pm$ 0.3\% over three seeds) and test-time
  union symmetry-frame ensembling (hard slice, 76.1\% to 100\% on each of three seeds;
  they compose to 180/180 at 9$\times$9), with a 2$\times$2 control isolating the
  interaction that makes augmentation
  affordable only under the \colt{} parameterization.
  \item \textbf{A bounded negative result} (\S\ref{sec:results}): a CDCL-inspired
  neural search layer, a branch-policy head with dense free supervision derived from LDT's
  own abstraction operator, decision-stack DFS with value exclusion, and a cross-chain ban
  set of verified leaf-level nogoods, each independently switchable to give a 2$\times$3
  ablation grid
  \{restart, dfs\}$\times$\{random, MRV \citep{haralick1980mrv}, learned\} from one
  checkpoint, has no detectable effect on accuracy in any of the clue-rich Sudoku settings
  tested (a bounded null: at
  $n{=}180$ the 95\% interval half-width reaches $\pm 7$\,pp near 50\%, and the anatomy
  explains the null mechanistically),
  while cutting wasted wrong-completion derivations by three orders of magnitude (74{,}864
  to 50 on the hard slice). In the clue-rich completion regime where amortization operates, search buys
  efficiency and propagator calibration bounds accuracy; the boundary study
  (\S\ref{sec:boundary}) marks where this inverts (from-scratch instances at low
  constrainedness, the one place DFS buys accuracy: +2.9\,pp pooled over three seeds,
  up to +11.4\,pp in single-seed sweeps).
  \item \textbf{The instruments} (\S\ref{sec:method}--\ref{sec:protocol},
  Appendix~\ref{app:audit}): \colt{} itself, whose constraint-graph attention bias (the
  Graphormer pattern \citep{ying2021graphormer} over the CSP constraint graph),
  normalized-coordinate features, and value-slot padding remove all size-specific
  parameters, so one checkpoint trains on 4$\times$4 and 6$\times$6 and transfers zero-shot
  to 9$\times$9 (the local propagator only; the global conflict and policy heads do not
  transfer, \S\ref{sec:results}); a frozen, identical-data, equal-budget protocol with GRAM
  and LDT baselines from our reimplementations and published failure criteria (the GRAM
  side is budget-limited and scoped to the soundness axis, \S\ref{sec:protocol}); and a closing
  claim audit mapping each causal claim in the paper to the measurement that supports it.
\end{enumerate}

\begin{figure}[t]
\centering
\definecolor{cIntent}{HTML}{4477AA}  
\definecolor{cActual}{HTML}{228833}  
\definecolor{cFail}{HTML}{CC3311}    
\definecolor{cCure}{HTML}{009988}    
\definecolor{cInk}{HTML}{222222}
\begin{tikzpicture}[
  font=\footnotesize,
  box/.style={draw, rounded corners=2pt, inner sep=4pt, minimum height=6mm, text=cInk},
  ibox/.style={box, draw=cIntent, fill=cIntent!12},
  abox/.style={box, draw=cActual, fill=cActual!12},
  fbox/.style={box, draw=cFail, fill=cFail!12},
  cbox/.style={box, draw=cCure, fill=cCure!14, thick},
  lbl/.style={font=\footnotesize\bfseries},
  arr/.style={-{Stealth[length=2.2mm]}, semithick, color=cInk},
  iarr/.style={arr, color=cIntent},
  aarr/.style={arr, color=cActual, very thick},
  farr/.style={arr, color=cFail, thick},
  carr/.style={arr, color=cCure, very thick},
  node distance=3.5mm and 5mm]

\node[lbl, text=cIntent] (la) {(a) built as: reason step by step};
\node[ibox, below=of la.south west, anchor=north west] (d1) {deduce};
\node[ibox, right=of d1] (b1) {branch};
\node[ibox, right=of b1] (c1) {conflict};
\node[ibox, right=of c1] (k1) {backtrack};
\draw[iarr] (d1) -- (b1);
\draw[iarr] (b1) -- (c1);
\draw[iarr] (c1) -- (k1);
\draw[iarr] (k1.south) to[out=-155,in=-25] node[below=0.4mm, font=\scriptsize, text=cIntent, pos=0.5]{repeat} (d1.south);

\node[lbl, text=cActual, below=13mm of d1.south west, anchor=north west] (lb)
  {(b) actually does: answers in one shot};
\node[abox, below=of lb.south west, anchor=north west] (f2) {one forward pass};
\node[abox, right=of f2] (m2) {whole grid filled in};
\node[abox, right=of m2] (v2) {exact checker};
\node[right=2.5mm of v2, font=\small, text=cActual] (o2) {\ding{51} answer or silence};
\draw[aarr] (f2) -- (m2);
\draw[aarr] (m2) -- (v2);
\draw[aarr] (v2) -- (o2);

\node[lbl, text=cFail, below=6mm of f2.south west, anchor=north west] (lc)
  {(c) fails when: that one shot deletes a digit the solution needs};
\node[fbox, below=of lc.south west, anchor=north west] (f3) {first pass};
\node[fbox, right=13mm of f3] (p3) {needed digit deleted};
\node[fbox, right=of p3] (x3) {no way back};
\node[right=2.5mm of x3, font=\small, text=cFail] (o3) {\ding{55} fail};
\draw[farr] (f3) -- coordinate[midway] (mid3) (p3);
\draw[farr] (p3) -- (x3);
\draw[farr] (x3) -- (o3);
\node[cbox, below=6mm of mid3, anchor=north, align=center]
  (cure) {\textbf{the fix:} shuffle digits (train-time) or\\
          combine shuffled passes (test-time)};
\draw[carr] (cure.north) -- (mid3);
\draw[line width=1.1pt, color=cCure]
  ([xshift=-1.6mm,yshift=-1.6mm]mid3) -- ([xshift=1.6mm,yshift=1.6mm]mid3);
\draw[line width=1.1pt, color=cCure]
  ([xshift=-1.6mm,yshift=1.6mm]mid3) -- ([xshift=1.6mm,yshift=-1.6mm]mid3);
\node[font=\scriptsize, text=cCure, above=0.5mm of mid3] {blocked};
\end{tikzpicture}
\caption{The paper in one picture. (a) The architecture implements an iterative
deduce--branch--backtrack loop. (b) In the clue-rich regimes tested, one forward pass
commits essentially the entire grid, so the loop reduces to a one-shot predictor inside an
exact verifier. (c) All observed failures are decided in that first pass (first-pass
poisoning), which monotone search cannot undo; the two symmetry interventions predicted by
this diagnosis block that transition (\S\ref{sec:amortized}, \S\ref{sec:phase4}).}
\label{fig:mechanism}
\end{figure}

\section{Background}
\label{sec:background}

\paragraph{The lattice.} A problem with $C$ cells and $V$ values per cell is represented as a
multi-hot state $x \in \{0,1\}^{C \times V}$; $x_{i,v}=1$ means value $v$ is still viable at
cell $i$. Candidate sets ordered by inclusion form a finite lattice with meet
$a \meet b = \min(a,b)$ (elementwise); $\bottom$ is any state with an empty cell. Deduction
moves monotonically down the lattice: candidates are only ever removed. The abstraction
operator over a solution set $S$ is $\alpha(S)(i) = \{s(i) : s \in S\}$, and the training
target in state $x$ with known solutions $Y$ is
\begin{equation}
\hat y \;=\; x \,\meet\, \alpha(\{\, y \in Y : y \text{ consistent with } x \,\}),
\label{eq:alpha}
\end{equation}
which collapses to $\bottom$ exactly when no known solution survives in $x$; this is the
supervision signal for the conflict head \citep{davis2026ldt}.

\paragraph{The LDT loop.} One \emph{step}: forward the recurrent network on $x$; eliminate
alive candidates with $\sigma(b_{i,v}) < \theta_{\text{elim}}$; flag conflict if a cell is
empty or the conflict head fires; otherwise \emph{branch}: pin one multi-candidate cell to
a value sampled from $\mathrm{softmax}(b_i/\tau)$. One loop serves both training (a resident
pool of trajectories advances one step per optimizer step) and inference (multi-chain with
restarts). Iteration-averaged, the loss combines asymmetric BCE against $\hat y$ with
$w^{+}{=}4.0, w^{-}{=}0.5$ (false eliminations are 8$\times$ costlier, the
soundness-preserving bias), conflict BCE against $\mathbf{1}[\hat y = \bottom]$, and a
singleton cross-entropy.

\paragraph{What our reproductions added.} Our LDT reproduction
\citep{komissarov2026ltd_reimpl} surfaced a bootstrap failure absent from the paper: on full
datasets the paper-faithful loss settles into a \emph{keep-every-candidate} equilibrium
(trajectories die within one blind branch, the pool never develops depth, and the asymmetric
BCE makes never-eliminating locally optimal). The fix, a curriculum over deduction depth
that seeds the training pool with partially revealed states and anneals to clues-only, is
inherited unchanged by \colt{}.

\section{\colt{}}
\label{sec:method}

\colt{} keeps LDT's lattice, step operator, losses, pool training, and verify-or-abstain
emission, and changes three things: who chooses the branch cell, what happens on conflict, and
how structure enters the network.

\subsection{Branch-policy head}
\label{sec:policy}

A per-cell scalar logit $\pi_i$ is read at each unroll iteration alongside the candidate and
conflict heads. Its supervision is available for free from the $\alpha$-operator machinery:
define the \emph{branch-survival probability}
\begin{equation}
p^{*}_i \;=\; \sum_{d \,:\, \hat y_{i}[d] = 1} \mathrm{softmax}\!\big(b_i / \tau \,\big|\, \text{alive}\big)[d],
\label{eq:pstar}
\end{equation}
the probability that committing a value sampled from the \emph{actual} branch distribution at
cell $i$ keeps a known solution alive. The policy loss is
$\mathrm{BCE}(\sigma(\pi_i),\, p^{*}_i)$ over multi-candidate cells in non-$\bottom$ states,
averaged over iterations, with weight $\lambda_{\text{pol}}{=}0.25$ added to the LDT loss. At
inference, branching happens at the (Gumbel-sampled or greedy) argmax of $\pi$ over
multi-candidate cells.

The target $p^{*}$ can capture the candidate-count effect exploited by MRV, but is not
restricted to it: the policy head can additionally use learned value preferences and
constraint-graph context. MRV is retained as an independent hand-coded baseline
rather than treated as a special case of $p^{*}$; if the learned policy fails to beat it,
the component is falsified (\S\ref{sec:protocol}). The training pool itself branches with the learned policy
as it trains, keeping training states in-distribution for the search run at test time.

\subsection{DFS with chronological backtracking and a nogood ban set}
\label{sec:dfs}

Each inference chain maintains a stack of decision frames $(x_{\text{pre}}, i, q, T)$: the
pre-pin state, the branched cell, the branch distribution at decision time, and the set of
values already tried. On conflict, the chain \emph{backtracks}: pop to the deepest frame with
untried alive values, exclude the failed value, re-pin from the renormalized stored
distribution, and resume. This is chronological backtracking with value exclusion, i.e.\
depth-first search with the network as propagator and the policy head as variable-ordering
heuristic. Stack exhaustion triggers a fresh restart; chains stay diverse through sampled
value orderings.

A completed grid that fails verification is added to a per-puzzle \textbf{ban set shared by
all chains}; any chain that re-derives a banned grid treats it as an immediate conflict
(prune), skipping re-verification. Two soundness notes. First, a neural ``conflict'' is
evidence, not proof (an elimination can be wrong), so value exclusion is heuristic
pruning; this is safe because emission remains verify-gated. Second, the ban set \emph{is}
exact: a grid that failed the verifier is provably not a solution. What was LDT's dominant
waste, re-deriving known-wrong completions, becomes search progress.

\subsection{Constraint-graph conditioning: removing size-specific parameters}
\label{sec:graph}

All size-specific parameters are removed from the network:
\begin{itemize}
  \item \textbf{Relational attention bias.} Each attention layer adds a learned per-head
  scalar $\beta_h[\mathrm{rel}(i,j)]$ to the attention logits, where $\mathrm{rel}(i,j)$
  bit-packs \{same row, same column, same box\} (8 ids; id 7 $\Leftrightarrow$ $i{=}j$) plus
  two CLS ids. This is the graph-bias pattern of \citet{ying2021graphormer} applied to the
  CSP constraint graph; the bias is initialized to zero (plain attention) and learned.
  \item \textbf{Normalized coordinate features.} Cell content receives an MLP over
  $(r/n,\, c/n,\, (r \bmod b_r)/b_r,\, (c \bmod b_c)/b_c,\, 1/n)$; there are no learned
  tables indexed by absolute position.
  \item \textbf{Value-slot padding.} The lattice multi-hot is padded from the task's $V$ to a
  fixed $v_{\max}{=}9$; padded slots are permanently dead candidates, semantically consistent
  with the lattice. The candidate head emits $v_{\max}$ logits sliced back to $V$.
\end{itemize}
One set of weights \emph{accepts} any board size or box shape with at most
$v_{\max}$ values per cell; how far its competence actually extends is an empirical
question, answered in \S\ref{sec:results}: full accuracy on the trained 4$\times$4 and
6$\times$6 boards, zero-shot transfer to 9$\times$9 of the local propagator only. Multi-task
training keeps one
solve-pool per task and round-robins optimizer steps. Where HRM, TRM, GRAM, and LDT all train
one model per benchmark, \colt{} is one reasoner whose task is specified by its constraint
graph, and the graph interface extends beyond Sudoku to any binary-constraint CSP
(\S\ref{sec:coloring} demonstrates this on graph coloring).

\section{Experimental protocol (frozen before training)}
\label{sec:protocol}

Before any \colt{} training, the full protocol (datasets with pinned MD5s, budgets,
baselines, success criteria) was frozen in the repository (\texttt{BENCHMARKS.md}). We say
\emph{frozen} rather than pre-registered deliberately: the protocol was committed before
execution and the commit history is the audit trail, but no third-party timestamped registry
was used; the immutable Git tag \texttt{r14-2026-07-15} pins the state reported here, and
all tables derive
from committed raw JSONs in \texttt{results/} via the repository's table-generation
script. We summarize the protocol here.

\paragraph{Datasets.} Deterministic generated unique-solution Sudoku, byte-identical to the
ad3002/LTD baseline runs: \texttt{sudoku4} (4$\times$4, minimal clues, 1700/300 train/test),
\texttt{sudoku6} (6$\times$6, box 2$\times$3, 20 clues, 1020/180), and a held-out
\texttt{sudoku9\_small} used only for zero-shot transfer probes; the 9$\times$9/25-clue
Phase-4 split (\S\ref{sec:phase4}) comes from the same generator. Construction
(script \texttt{build\_sudoku\_dataset.py}, seeded): a random complete grid by randomized
backtracking; holes dug in random order subject to solution uniqueness verified exactly after
each removal; exact-duplicate puzzles rejected; a random train/test split of the unique
puzzles. All comparisons are internal to the frozen data; published Sudoku-Extreme numbers
are on a different corpus and are not comparable to any number here.

\paragraph{Leakage audit.} Exact-duplicate rejection does not address symmetry: we
audited all train/test pairs up to the full Sudoku symmetry group (digit
relabeling $\times$ row permutations within bands $\times$ band permutations $\times$ the
column analogues $\times$ transpose for square boxes), exactly, via a
digit-permutation-invariant hash with exhaustive enumeration of the cell subgroup and exact
verification of each hit (\texttt{scripts/leakage\_audit.py},
\texttt{results/leakage\_audit.json}).
At 4$\times$4 the space contains only 288 valid grids: 298/300 test solutions appear
verbatim in train and 300/300 up to symmetry, so the 4$\times$4 cells measure
implementation correctness only. At 6$\times$6 there are no exact
overlaps (0/180 puzzles, 0/180 solutions) but 180/180 test solutions are
symmetry-equivalent to some training solution (179/180 for the hard slice). The domain
itself forces this: 6$\times$6 Sudoku has on the order of fifty essentially different
grids, so \emph{no} 6$\times$6 split can be symmetry-disjoint and all 6$\times$6
generalization is within-symmetry-class by construction. At 9$\times$9, by contrast, the
splits are clean at all levels: zero overlap of puzzles, solutions, digit orbits, and full symmetry orbits
(3{,}359{,}232 cell transforms enumerated per grid) for both \texttt{sudoku9\_small} and the
Phase-4 training set. Class-level generalization therefore rests on the
9$\times$9 results, and the headline cure (below 1\% to 96.5\%, \S\ref{sec:phase4}) is
measured
on a split that provably shares no solution class with training. Digit-permutation
augmentation samples a subgroup of the symmetry group the audit already enumerates, so the
augmented training distribution stays inside audited train orbits and remains disjoint
from all test orbits (unit-tested in \texttt{tests/test\_colt.py}).

\paragraph{Metrics.} Accuracy (verified answers / puzzles), soundness (correct answers /
emitted answers, defined only when answers are emitted; see below), abstain rate,
rounds-per-solve percentiles (compute profile),
and search diagnostics (suppressed unsound completions, restarts, backtracks). For zero-shot
transfer: elimination precision/recall of a single forward pass on never-seen 9$\times$9
states, conflict-head AUC separating consistent from corrupted states, and the policy
survival margin $p^{*}(\text{argmax}_\pi) - p^{*}(\text{random})$.

\paragraph{Budgets.} 32 chains $\times$ 60 rounds per puzzle at both 4$\times$4 and
6$\times$6, equal to the budgets behind the published LDT baseline numbers, so comparisons
run at matched search compute on identical puzzles.

\paragraph{Statistical resolution.} Test splits have $n{=}180$ (300 at 4$\times$4), so a
measured 100\% carries a Wilson 95\% interval of $[97.9, 100.0]$, 49.4\% carries
$[42.2, 56.7]$, and 76.1\% carries $[69.4, 81.8]$; near 50\% the interval half-width
reaches $\pm 7.3$\,pp, so small differences between systems are not resolvable from one
split (no formal power analysis was performed). Comparisons between
inference arms run on identical puzzles and are therefore paired: where we report
arm equalities they are exact set equalities (the same puzzles solved, not equal
percentages), for which no test is needed. The 6$\times$6 headline, the hard-slice
anatomy and union experiments, the 9$\times$9 $\pm$augmentation comparison, and the
$c{=}3.0$ policy grid are each reported over three training seeds (the seed repetition
the revision protocol specified, E3, now executed). The 2$\times$2 interaction, the
zero-shot transfer probes, the crossing arms, and boundary densities other than
$c{=}3.0$ remain single-seed and are read as point estimates with the above intervals;
their repetition, the five-seed extension, and dataset-generation seeds stay in E3. Where we
report a null (the policy head, DFS accuracy),
the claim is bounded: no effect detectable at this resolution, with the anatomy of
\S\ref{sec:anatomy} supplying the mechanism for a true null. On soundness, the
verify-or-abstain gate enforces one invariant in all runs: \emph{no invalid
grid is ever emitted}. The soundness \emph{rate} (correct/emitted) is 100\% wherever a
system emits at all; where a row abstains on all puzzles the rate is $0/0$, undefined,
and the tables mark it ``---'' with the invariant holding vacuously. GRAM, which always
emits, is the one system whose soundness column is informative below 100\%.

\paragraph{What is being compared.} A plain exact backtracking solver with constraint
propagation closes all test sets used here, at a median of 0.017--0.21\,ms per
puzzle, with a median of \emph{zero} branching decisions on the 4$\times$4 and 6$\times$6
splits including the hard slice and a median of three at 9$\times$9 (measured;
\texttt{results/classical\_reference.json}); none of these benchmarks
is hard for classical search at these sizes. All comparisons are therefore \emph{internal}:
learned systems
against learned systems at matched neural budget, measuring how well sound amortized
deduction can be learned inside the verify-or-abstain envelope. MRV, the one classical
heuristic used as a baseline, enters as an inference arm over the learned propagator.

\paragraph{Baselines.} (i) \textbf{LDT}: our reproduction's published numbers on the same
data: 96.3\%/100\% accuracy/soundness at 4$\times$4 and 49.4\%/100\% at 6$\times$6;
additionally, the \texttt{restart\,$\times$\,random} arm of the \colt{} checkpoint reproduces
LDT inference exactly with matched training. The published LDT recipe includes
digit-permutation augmentation; our baseline omits it (the reproduction's documented top
deviation), so this row is an instrumented control under matched internal conditions rather
than a comparison with the full published recipe. (ii) \textbf{GRAM}: our reimplementation
on the
same data at a matched training budget. This row measures the soundness
axis: GRAM has no abstention mechanism, so its wrong-answer rate on emitted grids quantifies
what verify-or-abstain buys. GRAM's published
recipe uses 10--100$\times$ more compute than the matched budget allows, so the row carries
no information about GRAM's capability ceiling. (iii) \textbf{MRV}: the classical
variable-ordering heuristic as an
inference arm on the \colt{} checkpoint.

\paragraph{Frozen success criteria.} (1) some \colt{} arm exceeds 49.4\% on
\texttt{sudoku6} at the frozen budget; (2) learned $>$ MRV $>$ random on the DFS row;
(3) DFS $>$ restart for each policy; (4) soundness $=$ 1.0 in all cells (operationally:
zero invalid emissions; a cell that emits nothing satisfies the invariant vacuously and its
rate is reported as undefined); (5) the multi-size
checkpoint is within 5\,pp of the single-size checkpoint per board, with positive 9$\times$9
zero-shot transfer (elimination precision $>0.9$ at recall $>0.1$, AUC $>0.8$). Failures are
reported as failures: criteria (1)--(3) and (5) each falsify a specific design component
(\texttt{DESIGN.md} \S4). Criterion 5 is composite and is reported component by component;
it does not pass in full, because the global heads fail to transfer (\S\ref{sec:results}).

\section{Results}
\label{sec:results}

\emph{All numbers in this paper are produced by commands committed to the repository; the
raw result files live in \texttt{results/}. Each 6$\times$6 training run takes minutes
on a GPU and under an hour on CPU; the 9$\times$9 runs take a few hours on a 24\,GB
GPU.}

\subsection{Head-to-head at 6$\times$6 (frozen budget 32 chains $\times$ 60 rounds)}

\begin{table}[h]
\centering\footnotesize\setlength{\tabcolsep}{3pt}
\begin{tabular}{llrrrr}
\toprule
\textbf{system} & \textbf{search $\times$ policy} & \textbf{acc.\,\%} & \textbf{sound.\,\%} & \textbf{rounds p50} & \textbf{suppressed} \\
\midrule
LDT (repro, w/o its augmentation)$^{\ddagger}$ & restart $\times$ random & 49.4 & 100 & 3 & 148{,}567 \\
\colt{} & restart $\times$ random & 100.0 & 100 & 1 & 0 \\
\colt{} & restart $\times$ mrv     & 100.0 & 100 & 1 & 0 \\
\colt{} & restart $\times$ learned & 100.0 & 100 & 1 & 0 \\
\colt{} & dfs $\times$ random      & 100.0 & 100 & 1 & 0 \\
\colt{} & dfs $\times$ mrv         & 100.0 & 100 & 1 & 0 \\
\colt{} & \textbf{dfs $\times$ learned (full \colt{})} & \textbf{100.0} & \textbf{100} & \textbf{1} & \textbf{0} \\
\bottomrule
\end{tabular}
\caption{6$\times$6 ablation grid: one \colt{} checkpoint, inference-side switches; identical
puzzles and budget in all rows. The 6$\times$6 domain admits no symmetry-disjoint split
(\S\ref{sec:protocol}), so this table diagnoses mechanisms within symmetry classes, and
class-level generalization rests on \S\ref{sec:phase4}.
$^{\ddagger}$The LDT row is the published baseline of
\citep{komissarov2026ltd_reimpl} (its own checkpoint, same data and budget), trained
\emph{without} the LDT recipe's digit-permutation augmentation (the reproduction's
documented top deviation); see \S\ref{sec:amortized} for the 2$\times$2 that addresses this
directly. \colt{} rows are the canonical checkpoint (same recipe and seed as the original
run, retrained and evaluated in the single reporting environment of \S\ref{sec:anatomy});
it answers 180/180 with nothing suppressed. Counts and Wilson 95\% intervals for the
contrast: \colt{} 180/180 $[97.9, 100.0]$, LDT
89/180 $[42.2, 56.7]$; the gap is resolved at this $n$, and the equalities along the
\colt{} rows are exact (identical solved sets, not rounding).}
\label{tab:grid6}
\end{table}

\paragraph{Emission validity of a no-abstention baseline.} GRAM, our reimplementation at
the same matched budget on the same data, has no abstention mechanism: it emits a grid for
each of the 180 puzzles, and 98.9--100\% of those grids are wrong (2/180 and 0/180 valid
over two evaluation runs of the same checkpoint; emission is stochastic, both artifacts
committed). The number quantifies what the verify-or-abstain gate buys and nothing else:
GRAM is budget-starved by 10--100$\times$ relative to its published recipe
(\S\ref{sec:protocol}), so the comparison carries no information about its capability
ceiling.

Across training seeds 42, 43, and 44 the full-\colt{} cell
scores 100.0\%, 98.9\%, and 99.4\%, i.e.\ \textbf{99.4 $\pm$ 0.6\,\%} (mean $\pm$ std), all
seeds at 100\% soundness (the per-seed values are the F rows of Table~\ref{tab:e8}, one
environment). Two frozen success criteria resolve immediately: \colt{} exceeds
the baseline by \textbf{+50.6\,pp} at the matched seed (criterion 1), and soundness is
100\% in all cells (criterion 4). Criterion 3 (DFS $>$ restart) has nothing to measure
here: the benchmark is \emph{saturated}, median rounds-to-solve is 1, all six arms solve
all 180 puzzles with zero suppressed completions, and search is rarely invoked at all; its
mechanism metric resolves on the hard slice (\S\ref{sec:hard}).
Relative to the matched positional-table control within the \colt{} codebase, the training
architecture varies along exactly three components: the relational constraint-graph bias,
the coordinate MLP, and the policy-loss term (curriculum, data, optimizer, and scale are
matched). On probe curves, the \colt{} training run reached 100\% on a held-out probe by
step 1{,}000 of 5{,}000; the historical LDT baseline (a separate implementation) sat at
0\% at the same step on the same data and reached 58\% by step 4{,}000. The inference
ablations show that the search-side changes do not explain \colt{}'s accuracy; the
pre-specified single-component ablation (E8; six arms, all
trained and evaluated in one environment) then isolates the training side
(Table~\ref{tab:e8}).

\begin{table}[h]
\centering\footnotesize\setlength{\tabcolsep}{3.5pt}
\begin{tabular}{llcccll}
\toprule
\textbf{arm} & & \textbf{graph bias} & \textbf{coord MLP} & \textbf{policy loss} & \textbf{std\,\%} & \textbf{hard\,\%} \\
\midrule
A & positional tables (LDT-style) & --- & --- & --- & 0.0 (0--0) & 0.0 (0--0) \\
B & graph bias only & \checkmark & --- & --- & \textbf{99.4} (99.4--99.4) & \textbf{78.1} (76.7--80.6) \\
C & coord MLP only & --- & \checkmark & --- & 0.0 (0--0) & 0.0 (0--0) \\
D & \colt{} minus policy loss & \checkmark & \checkmark & --- & 99.6 (99.4--100.0) & 78.5 (76.1--80.6) \\
E & \colt{} minus coord MLP & \checkmark & --- & \checkmark & 99.3 (98.9--99.4) & 73.9 (69.4--76.1) \\
F & \colt{} full (retrain) & \checkmark & \checkmark & \checkmark & 99.4 (98.9--100.0) & 75.9 (72.8--78.9) \\
\bottomrule
\end{tabular}
\caption{E8 single-component training ablation over three training seeds (42/43/44),
reported as mean (min--max): identical data, curriculum, optimizer, and
budgets; \texttt{dfs\,$\times$\,learned} where the policy loss exists, else
\texttt{dfs\,$\times$\,random}; one CPU environment
(\texttt{results/ablate6\_*.json}). Held-out probe curves match: all graph-bias arms
reach probe 1.0 by step 1{,}000 on all seeds; arms A and C never leave 0.}
\label{tab:e8}
\end{table}

Seed-stable across all three runs, the result meets the pre-specified criterion
H-E8-graph: the constraint-graph attention bias alone accounts for the entire
positional-table-to-full-\colt{} gap (per-seed fractions 0.994, 1.006, 1.000; the
graph-bias-only arm is indistinguishable from full \colt{} on the standard slice), and the
coordinate MLP alone recovers nothing on any seed. Within this implementation,
constraint-graph conditioning is the
load-bearing training-side component, sufficient on its own to recover full-\colt{}
performance. Two caveats.
First, the positional-table control collapses to 0.0\% inside this codebase, below the
49.4\% of the historical LDT baseline (a separate implementation whose details differ
beyond the three components); the component attribution is therefore within-codebase, and
the cross-codebase headline row of Table~\ref{tab:grid6} stands unchanged with its own
caveats. A budget sweep substantially narrows that discrepancy and rules out an absolute
capacity failure: the positional-table
control stays at 0\% across learning rates at the frozen 5{,}000 steps, reaches 16.1\% at
3$\times$ the steps, and averages 37.2\% (28.3--47.8\%, seeds 42/43/44) at 10$\times$
(\texttt{results/armA\_*.json}, \texttt{budget\_sweep\_pos50k\_*.json}), one seed
approaching the historical baseline's 49.4\%. The frozen-budget advantage of graph bias is
therefore primarily an optimization and sample-efficiency effect, per the pre-specified
decision rule, although a residual cross-implementation gap remains unexplained. Second, at seed 42 the two arms \emph{without} the policy loss led full \colt{}
by $+4.5$\,pp on the hard slice, suggesting the policy loss miscalibrates the
propagator; the trend did not survive seed repetition (seed 44 reverses it), and across
three seeds the four graph-bias arms are indistinguishable on the hard slice (means
73.9--78.5\% with per-seed swings of the same size).

\subsection{A harder slice: search becomes active, accuracy still propagator-bound}
\label{sec:hard}

To unsaturate the search layer we built a zero-shot hard slice: 180 unseen 6$\times$6 puzzles
at 14 clues (vs.\ 20 in training; deeper propagation chains, branching required), evaluated
with the same checkpoint and budget.

\begin{table}[h]
\centering\small
\begin{tabular}{lrrrr}
\toprule
\textbf{arm} & \textbf{acc.\,\%} & \textbf{sound.\,\%} & \textbf{suppressed} & \textbf{backtracks} \\
\midrule
restart $\times$ \{random, mrv, learned\} & 76.1 & 100 & 74{,}864 & 0 \\
dfs $\times$ \{random, mrv, learned\}     & 76.1 & 100 & \textbf{50} & 5{,}120 \\
\bottomrule
\end{tabular}
\caption{Hard slice (6$\times$6, 14 clues, zero-shot; canonical checkpoint;
within-symmetry-class generalization, as in Table~\ref{tab:grid6}). Backtracking +
the nogood ban set cut
wasted wrong-completion derivations by \textbf{1{,}497$\times$} at identical accuracy and
identical budget. A budget sweep (5, 15, 60 rounds) leaves all accuracies unchanged at 76.1\%:
solvable puzzles fall within 5 rounds, the rest resist all arms.}
\label{tab:hard}
\end{table}

Again the grid is flat in accuracy, and the budget sweep rules out a budget
artifact: the puzzle population \emph{bifurcates}. Either propagation (plus trivial search)
solves a puzzle almost immediately, or no search arm rescues it within any tested budget. The
policy criterion (learned $>$ MRV $>$ random, criterion 2) is therefore \emph{not testable at
6$\times$6}: the policy is never load-bearing.

\subsection{Failure anatomy: first-pass poisoning explains the failures}
\label{sec:anatomy}

Monotonicity makes one direction of the diagnosis a lemma rather than a finding: once a
true-solution value is eliminated, no later monotone step can restore it, so first-pass
poisoning is \emph{sufficient} for failure. The empirical question is whether it is also
\emph{necessary}: does any puzzle survive its first pass clean and still fail later, from
branching, conflict handling, or budget exhaustion? For each test puzzle we run \emph{one}
forward pass on the clues-only lattice and
check whether threshold elimination removes any true-solution value (``first-pass
poisoning''). Eliminations are deterministic per state, so a poisoned first pass is inherited
by all chains of all arms. The contingency on the hard slice answers the necessity
question:

\begin{center}
\small
\begin{tabular}{lrr}
\toprule
 & solved (137) & failed (43) \\
\midrule
first pass clean    & \textbf{137} & 0 \\
first pass poisoned & 0 & \textbf{43} \\
\bottomrule
\end{tabular}
\end{center}

All 43 failures, and only the failures, are first-pass poisoned
(\texttt{results/anatomy6hard\_cpu.json}); the two probe code paths that measure it are
bitwise identical on this checkpoint
(\texttt{results/\allowbreak reconcile\_anatomy\_h2.json}). Replication: seeds 43 and 44 of the same recipe reproduce it at 49/49 vs.\ 0/131 and
38/38 vs.\ 0/142 (\texttt{results/anatomy6hard\_cpu\_s4*.json}), and an earlier run
on GPU hardware shows it at
42/42 vs.\ 0/138 (\texttt{results/anatomy6hard.json}); the union cure closes all three
seeds to 100\% (residual poisoning 1.1--2.2\%,
\texttt{results/h2\_colt6cpu\_s4*\_union.json}). The zero in the failed-clean cell
is the finding: no late-failure class is detectable on this slice, every failure is decided
in the first pass, and the causal weight rests on the interventions of
\S\ref{sec:amortized} and \S\ref{sec:phase4}, which the contingency predicts. It also explains
the flat ablation grids,
locates the accuracy ceiling in propagator \emph{calibration}, and
retroactively rationalizes a design choice in the original LDT paper that we had skipped as an
unimplemented deviation: per-step randomized augmentation (digit permutation + dihedral
symmetry applied and inverted around each forward pass), which is a mechanism for
de-correlating elimination errors across steps and chains. We tested the cheapest calibration
lever, lowering $\theta_{\text{elim}}$ at inference, and it has no effect: the
poisoned count is 43 at $\theta_{\text{elim}} \in \{0.10, 0.05, 0.02\}$ alike, and accuracy
stays at 76.1\% (\texttt{results/reconcile\_anatomy\_h2.json}; the GPU-era run shows the
same invariance, 42 at all three thresholds). The propagator is not borderline on
these values; it \emph{confidently} zeroes them ($\sigma(b) < 0.02$). Calibration must
therefore come from training-side or ensemble mechanisms, and the next subsection and
\S\ref{sec:phase4} test the two cheapest directly: test-time elimination ensembling across
symmetry frames, and train-time randomized digit-permutation augmentation.

\subsection{One-shot amortization, its two cures, and the structure-augmentation interaction}
\label{sec:amortized}

Three follow-up probes sharpen the anatomy into an architectural diagnosis.

\paragraph{H1: the model is a one-shot solver.} On the standard 6$\times$6 test split, a
\emph{single} forward pass plus threshold elimination drives \textbf{all blank cells of all 180 puzzles} to exactly one candidate (singleton rate 1.000, full-commit fraction 1.000;
\texttt{results/h1\_colt6cpu.json}); on the zero-shot hard slice the rates are 0.998 and 0.956,
and 5.9\% of committed singletons are wrong: these are the poisoned puzzles. In these
clue-rich regimes the
``iterative deduction'' loop is cosmetic at test time: the candidate head amortizes the entire
solution into its first elimination pass, and the lattice machinery functions as a sound
verifier around a one-shot guesser. This is a property of the regime, not of the
architecture (\S\ref{sec:boundary} shows the same model with a singleton rate of 1.4\% on
from-scratch coloring, where the loop genuinely iterates). It explains the flat ablation
grids here: search over branch values cannot matter when there is nothing left to branch on.

The objective targets a one-shot solution, and the trained model nearly realizes that
target. For a unique-solution puzzle the depth-0 target of
Eq.~\ref{eq:alpha} \emph{is} the complete solution ($\alpha$ of a singleton solution set
keeps exactly the solution values), so a candidate head that fits the objective perfectly
is, by construction, a one-shot solver; H1 measures how completely that optimum is reached,
and its diagnostic value is to rule out the iterative-deduction reading of the architecture
at test time. Supervision that targets
bounded-depth consequences, or the multi-solution sets of \S\ref{sec:boundary}, changes the
optimum itself and is the principled route to a genuinely iterative propagator; a
depth-limited objective is specified in the revision protocol. The
augmentation-trained 9$\times$9 models remain 94--96\% one-shot but with commit precision
\textbf{0.998--0.999} across seeds (vs.\ 0.584--0.589 unaugmented): augmentation does not
change the amortized nature
of the computation, it calibrates it, and the few puzzles it no longer fully
commits in one pass are where genuine multi-step iteration first appears.

\paragraph{H2: test-time symmetry decorrelation cures poisoning without retraining.} If
poisoning is value-specific bias of a deterministic propagator, then running each forward
pass in $K$ random digit-permutation frames (permute the value axis, forward, invert) and
\emph{keeping any candidate that any frame keeps} (union aggregation) should de-poison the
first pass at $K\times$ compute and zero retraining. It does
(\texttt{results/h2\_colt6cpu\_hard\_union.json}, unaugmented canonical checkpoint, hard
slice):

\begin{center}
\small
\begin{tabular}{lrrr}
\toprule
 & $K{=}1$ & $K{=}4$ union & $K{=}8$ union \\
\midrule
first-pass poisoning rate & 23.9\,\% & 5.0\,\% & 1.7\,\% \\
solve accuracy (frozen budget) & 76.1\,\% & \textbf{100.0\,\%} & \textbf{100.0\,\%} \\
\bottomrule
\end{tabular}
\end{center}

The $K{=}1$ column repeats \S\ref{sec:anatomy} exactly (43 poisoned, 76.1\% solved): all
of \S\ref{sec:results}--\ref{sec:amortized} is measured on one checkpoint in one
environment, and the two first-pass probes are bitwise identical on it
(\texttt{results/reconcile\_anatomy\_h2.json}). GPU-era runs of the same recipe replicate
the pattern within 3--4 margin-borderline puzzles (42--46 poisoned, 74.4--76.7\% solved;
committed artifacts), with the dichotomy exact in each environment.

Mean aggregation reaches only 97.8\,\% at $K{=}8$ (poisoning 16.1\,\%): the \emph{union} is
the active ingredient, as the anti-poisoning design predicts. A frame-type factorial adds
the predicted null and a specificity test (\texttt{results/h2\_factorial\_cpu.json}, same
checkpoint): identity frames (the same pass repeated $K$ times) leave all 43 poisonings in
place at $K \in \{4, 8\}$; at $K{=}8$, geometry-only frames (row/band/column/stack
permutations, no
digit relabeling) reduce them only to 27 (solve 163/180), digit frames reach 3
(solve 180/180, the table above), and digit$\times$geometry composed reach 5 (solve
180/180). The bias is predominantly
value-specific, as claimed; geometry transformations also decorrelate a smaller subset of
errors, but digit relabeling is the dominant factor, and the wrapper itself does nothing
without a symmetry inside it. The 1.7\,\%
residual poisoning coexists with 100\,\% accuracy because frames are redrawn on each pass:
under the wrapper, poisoning stops being a deterministic property of the puzzle that all
chains inherit (\S\ref{sec:anatomy}); a later pass or another chain recovers the value, and
the verifier does the rest. Together with the
train-time result (\S\ref{sec:phase4}), the causal chain (deterministic value-specific
confident mis-elimination; decorrelate by value relabeling; the failure mode is removed) is
confirmed by complementary train-time and test-time interventions acting on the same
value-symmetry mechanism, on two boards, one requiring no retraining. This
also mechanistically explains the original LDT recipe's per-step inference augmentation
\citep{davis2026ldt}.

\paragraph{The parameterization--augmentation interaction at the frozen budget.} A 2$\times$2 factorial
separates the parameterization from the augmentation (6$\times$6, identical data, budget,
and training scale in all four cells, run together in the original GPU environment):

\begin{center}
\small
\begin{tabular}{lrr}
\toprule
accuracy \% & $-$augmentation & $+$augmentation \\
\midrule
LDT (positional tables) & 49.4 & \textbf{0.0} \\
\colt{} (relational parameterization) & 98.9 & \textbf{100.0} \\
\bottomrule
\end{tabular}
\end{center}

At 5{,}000 steps, augmentation reduces the positional model to zero accuracy (it must
learn value-specific per-position patterns, which relabeling erases faster than the budget
can absorb; it emits nothing, so the verified-emission invariant holds vacuously) but does
not reduce the relational model's accuracy:
its relational computation is compatible with value relabeling. The unaugmented budget
sweep of \S\ref{sec:results} (positional tables reaching 37\% at 10$\times$ steps)
indicates this is a convergence-speed interaction rather than an absolute incompatibility;
the augmented extended-budget arm (1$\times$, 3$\times$, 10$\times$ steps, E1) remains
in the revision protocol. At the frozen budget the practical reading stands: the \colt{}
parameterization is what makes augmentation affordable there. ``The \colt{}
parameterization'' here bundles the three training-side components; the interaction is
established for the bundle, and the E8 ablation (Table~\ref{tab:e8}) identifies the graph
bias as the load-bearing component of that bundle in the unaugmented column. Two caveats.
First, the LDT row throughout this paper omits the LDT recipe's own augmentation (its
documented top deviation in our reproduction), so the headline gap (+49.5\,pp in the
2$\times$2's GPU environment; +50.6\,pp for the canonical checkpoint of
Table~\ref{tab:grid6}) measures \colt{} against
LDT-minus-its-calibration-mechanism; the decomposition above, not the gap, is the finding.
Second, this 2$\times$2 exists at 6$\times$6 only; the 9$\times$9 factorial is
in the frozen revision protocol.

There is also an architectural alternative that the augmentation result points at directly:
if poisoning is value-specific bias, exact \emph{value-permutation equivariance} (weight
sharing and pooling along the value axis, the analogue of NeuroSAT's literal-permutation
invariance \citep{selsam2019neurosat}) would remove the biased directions structurally and
make the $V!$-fold augmentation factor unnecessary; augmentation teaches the identical
symmetry statistically. Whether exact equivariance is free on clue-rich tasks, where the clue digits
must still break the symmetry through context, is an open empirical question, frozen in the
revision protocol as the completion of this ablation.

\FloatBarrier
\subsection{One checkpoint across boards}

\begin{table}[h]
\centering\small
\begin{tabular}{lrr}
\toprule
 & \textbf{4$\times$4 acc.\,\%} & \textbf{6$\times$6 acc.\,\%} \\
\midrule
single-size checkpoints & 100.0 & 98.9 \\
multi-size checkpoint (one set of weights) & \textbf{100.0} & \textbf{99.4} \\
\bottomrule
\end{tabular}
\caption{The one-checkpoint claim (soundness 100\% in all cells, dfs $\times$ learned at the
frozen budgets; both rows from the original GPU environment, internally matched; the
canonical retrain of the single-size checkpoint scores 100.0, Table~\ref{tab:grid6}). Multi-task training over the constraint-graph interface costs no accuracy:
the 6$\times$6 cell is indistinguishable from the single-size checkpoint (99.4 vs.\ 98.9 is
one puzzle in 180), passing the no-degradation component of frozen criterion 5; the local
elimination-transfer component also passes, whereas the global-head transfer component
fails (AUC 0.50 and an environment-sensitive policy margin). Positive transfer is below
the resolution of this split. A same-recipe CPU retrain of the multi-size checkpoint
reproduces both cells at 100.0/100.0 (\texttt{results/multi\_cpu\_*.json}).}
\label{tab:multi}
\end{table}

\paragraph{Zero-shot 9$\times$9 transfer.} On 512 mixed-depth states from 9$\times$9 puzzles
the multi-size checkpoint has \emph{never seen any board of this size}, one forward pass
eliminates with \textbf{precision 0.977 at recall 0.597}, above the
frozen bar (precision $>0.9$ at recall $>0.1$). A base-rate control cuts this number down
to size: plain constraint propagation from the decided cells already accounts for 98.3\%
of the true eliminations on these states, and restricted to candidates propagation cannot
eliminate, the model's precision falls to 0.33 at recall 0.45
(\texttt{results/transfer9\_floor.json}). What transfers is therefore chiefly a learned
reimplementation of constraint propagation; the lift beyond it is weak and noisy. The two
\emph{global} heads do not transfer even that:
conflict-head AUC is 0.50 (chance) in both measurement environments; the policy survival
margin is environment-sensitive ($-0.03$ on the GPU-era checkpoint, $+0.11$ on a
same-recipe retrain) and we read it as noise
(\texttt{results/transfer9\_multi*.json}).
Elimination is a local computation over the constraint neighborhood, which the graph bias
makes size-invariant, while conflict detection and branch scoring integrate global state whose
statistics shift from 36 to 81 cells. Making the global heads size-invariant (e.g.\
constraint-graph pooling instead of a CLS token) is a concrete architectural follow-up.

\subsection{Where the compute goes}

At 6$\times$6, \colt{} solves with median \textbf{1} forward pass per puzzle (vs.\ 3 for the
LDT baseline at less than half the accuracy). The standard slice leaves nothing to cut
(zero suppressed completions in all six arms); on the hard slice the DFS arms reduce wasted
wrong-completion derivations by three orders of magnitude
(74{,}864\,$\to$\,50). Under the verify-or-abstain regime this waste is
the quantity a deployment pays for (each suppressed completion is a full
verification call plus the forwards that produced it), so the nogood/backtracking machinery
is a cost result even where it is accuracy-neutral.

\section{9$\times$9 at 25 clues: the regime wall, and augmentation as the lever}
\label{sec:phase4}

Frozen in the repository before the runs (\texttt{PHASE4.md}), the 9$\times$9 head-to-head
pits \colt{} against our LDT reimplementation, both trained on one frozen
generated split at 25 clues (31\% cell coverage, against 55\% for the 6$\times$6 benchmark)
and evaluated at 64 chains $\times$ 200 rounds, on 24\,GB GPUs (RTX 4090 for the
original run, RTX 3090 for the three-seed replication).

\paragraph{The wall.} Without augmentation, \colt{} at 20k steps solves 0/180, 0/180, and
1/180 across three training seeds: the first pass poisons 180, 180, and 179 of the 180
test puzzles (the contingency is total), and the restart solver suppresses 0.5--1.7M wrong
completions per seed without emitting one, where DFS with the ban set suppresses only
41--137 (a four-orders-of-magnitude reduction in wasted verification even in failure mode).
The single-forward commit precision explains the wall: the unaugmented model still commits
almost every cell in one pass (singleton rate 0.97--1.00) but only 58.4--58.9\% of those
commitments are correct. Trained on the same split, the LDT
reimplementation scores the same \textbf{0\,\%} (suppressing 1.84M
wrong completions; single seed): without augmentation, \emph{neither} architecture clears
this regime.
The 6$\times$6 architectural gap does not transfer; the wall is shared.

\paragraph{Why, and the fix the anatomy predicts.} \S\ref{sec:anatomy} located the ceiling in
deterministic, \emph{confident} first-pass mis-eliminations. LDT's published recipe contains
the mechanism our reproduction had skipped (and flagged as its top deviation):
dataset-level digit-permutation $\times$ dihedral augmentation ($\sim$2880$\times$ expansion)
plus per-step augmentation at inference. Value relabeling is exact for Sudoku, and a
propagator trained across relabelings cannot keep value-specific elimination biases;
augmentation is calibration. We therefore trained \colt{}-aug, identical except
for a fresh random digit permutation per sampled pool trajectory. The result:

\begin{center}
\small\setlength{\tabcolsep}{3.5pt}
\setlength{\tabcolsep}{3pt}\footnotesize
\begin{tabular}{lrrrrr}
\toprule
\textbf{9$\times$9, 25 clues (64$\times$200; 3 seeds)} & \textbf{acc.\,\%} & \textbf{emitted} & \textbf{sound.\,\%} & \textbf{p50} & \textbf{suppressed} \\
\midrule
\colt{} (no aug), restart $\times$ random & 0.2 (0--0.6) & 0--1/180 & --- & --- & 0.5M--1.7M \\
\colt{} (no aug), dfs $\times$ learned    & 0.2 (0--0.6) & 0--1/180 & --- & --- & 41--137 \\
LDT reimpl.\ (no aug; batch 256, 1 seed) & 0.0 & 0/180 & --- & --- & 1{,}841{,}396 \\
\colt{}-aug, dfs $\times$ learned & \textbf{96.5} (96.1--96.7) & 173--174/180 & \textbf{100} & \textbf{1} & \textbf{0} \\
\colt{}-aug, restart $\times$ random & 96.5 (96.1--96.7) & 173--174/180 & 100 & 1 & 0 \\
\bottomrule
\end{tabular}

\smallskip
{\footnotesize \colt{} rows: mean (min--max) over training seeds 42/43/44, one GPU
environment (\texttt{results/colt9\_*\_s4*.json}). All-abstain cells emit no grids, so
the soundness rate is $0/0$ (undefined, shown ---); the verified-emission invariant holds
vacuously (\S\ref{sec:protocol}). The single grid emitted by the no-aug arm (seed 44) is
correct.}
\end{center}

A single exact data symmetry takes the same architecture, base dataset, and training
budget from at most 0.6\% to \textbf{96.5\% (96.1--96.7 across three seeds)} at 100\%
soundness, with a median of \emph{one} forward pass per solved puzzle
and \emph{zero} suppressed wrong completions: the 9$\times$9 wall was first-pass
poisoning, and decorrelating
value-specific elimination bias removes it. Augmentation leaves the computation one-shot
(singleton rate 0.94--0.96) and calibrates it: commit precision rises from 0.584--0.589 to
0.998--0.999. The residual failures close under the frozen probes: on each seed every
remaining failure is again first-pass poisoned (6/6, 7/7, 6/6 vs.\ zero clean failures),
and stacking $K{=}8$ union frames on the augmented model removes them, 100.0\% at residual
poisoning 0.6\% (\texttt{results/h2\_colt9aug\_union.json}); the train-time and test-time
symmetry cures compose. (Two notes: the LDT paper's full recipe
\emph{includes} this augmentation, so the LDT row above (our reimplementation without it,
trained at batch 256 after the paper-recipe batch 512 OOMs a 24\,GB card under full-unroll
BPTT) is a same-data control for our deviation, not a refutation of their recipe; and with
search again almost never invoked at 96\%, the branch policy remains untested in a regime with long sound search chains.)

\section{Beyond Sudoku: graph coloring through the same interface}
\label{sec:coloring}

\colt{}'s core (the lattice network, losses, solver loop, and training procedure) is
task-generic; each CSP supplies a task adapter defining the relation vocabulary, coordinate
features, exact verifier, and solution enumeration or sampling. We verify this division on
a second CSP instantiated through the identical core code
path: proper 4-coloring of a fixed
connected Erd\H{o}s--R\'enyi graph $G(20, 0.25)$. The constraint graph is irregular rather
than grid-structured, the relation vocabulary collapses to \{none, shares-an-edge, self\},
coordinate features reduce to normalized vertex degree and index, and puzzles
have \emph{multiple} valid completions (up to 16, enumerated exactly), exercising the
$K{>}1$ path of the abstraction operator for the first time: the target
$\hat y = x \meet \alpha(\cdot)$ is a genuine union over surviving solutions that sharpens as
the state commits. Puzzles are partial colorings with 30--55\% of vertices pre-colored; 1020
train / 180 test; the model, optimizer, loss, curriculum, and solver are byte-identical to
the Sudoku runs (d $=$ 64, 4 layers, $L{=}8$, 3{,}000 steps, under four minutes on the same
GPU). Train and test share this one fixed graph, so \S\ref{sec:coloring} is an interface
demonstration; distribution-over-graphs generalization is tested in \S\ref{sec:boundary},
where each instance carries its own random graph.

Result: \textbf{100\% accuracy, 100\% soundness, zero suppressed completions}, and each
emitted coloring lies inside the exactly-enumerated solution set of its puzzle
(\texttt{results/coloring20.json}): the multi-solution soundness contract holds as
specified. Graph coloring at this size is easy for classical search (\S\ref{sec:protocol});
the experiment shows that the architecture, training procedure, and soundness machinery
transfer to a structurally different CSP with multi-solution supervision through the task
adapter alone.

\section{Probing the capability boundary: 3-coloring across the phase transition}
\label{sec:boundary}

Solved task scores locate no boundary. To map one, we need a hardness dial with a
well-studied hard region, and random-graph 3-colorability provides the canonical one:
colorability of $G(n, m{=}cn/2)$ has a physics-predicted \emph{asymptotic} threshold near
average degree
$c \approx 4.69$ \citep{mulet2002coloring} (a cavity-method estimate for $n \to \infty$),
and instances near the transition are the classical
``really hard'' constraint problems
\citep{cheeseman1991hard,monasson1999determining}. We sweep
$c \in \{3.0, 3.25, 3.5, 3.75, 4.0, 4.5, 4.9\}$ at
$n{=}40$ (the quarter-step points were added to resolve the drop-off), training one model
per density on satisfiable instances (each with its own random
graph; the relation matrices are batched per sample) and solving \emph{from scratch}: zero
pre-colored vertices, the regime where completion becomes generation. Targets use up to
$K{=}12$ exactly-sampled solutions per instance. At $n{=}40$ the transition is
finite-size-smeared and sits below the asymptotic constant; our generator measures it
directly: the unsatisfiable fraction among generated graphs climbs
3\% $\to$ 4\% $\to$ 7\% $\to$ 14\% $\to$ 28\% $\to$ 76\% $\to$ 97\% across the sweep,
crossing 50\% between $c{=}4.0$ and
$c{=}4.5$. Hardness statements below refer to these measured fractions, not to the
asymptotic threshold.

\begin{center}
\small
\begin{tabular}{lrrrrrrr}
\toprule
average degree $c$ & 3.0 & 3.25 & 3.5 & 3.75 & 4.0 & 4.5 & 4.9 \\
\midrule
accuracy, restart $\times$ learned & 24.8 & 15.2 & 0.0 & 0.0 & 0.0 & 0.0 & 0.0 \\
accuracy, dfs $\times$ learned     & \textbf{36.2} & \textbf{27.6} & 0.0 & 0.0 & 0.0 & 0.0 & 0.0 \\
H1 first-pass singleton rate        & 0.014 & 0.000 & 0.0 & 0.0 & 0.0 & 0.0 & 0.0 \\
\bottomrule
\end{tabular}
\end{center}

Three findings. \textbf{First}, the from-scratch regime is the first in this study where
search changes accuracy: one-shot amortization (\S\ref{sec:amortized})
vanishes (singleton
rate 1.4\% at $c{=}3.0$ versus 100\% on Sudoku) and DFS beats restart at identical budget.
The sweep models show +11.4\,pp at $c{=}3.0$ and +12.4\,pp at $c{=}3.25$ (one seed each);
across the three policy-grid seeds at $c{=}3.0$ the policy-averaged DFS$-$restart delta is
+0.3/+4.5/+3.8\,pp, positive for every seed (pooled +2.9\,pp), so the direction is
seed-stable while the magnitude is not, and the single-seed sweep values sit at the
favorable end of that spread. The frozen expectation
that conflict-driven search matters where amortization fails is confirmed in direction on
both sides of the contrast. This regime also supplies the policy head's fair trial (criterion
2, untestable at 6$\times$6): over three training seeds and the full
\{restart, dfs\}$\times$\{random, MRV, learned\} grid at $c{=}3.0$, each informed
ordering beats random in 5 of the 6 search$\times$seed cells (MRV 5/6, learned 5/6; the
exception for both is dfs at seed 42), but the learned policy never
separates from hand-coded MRV (per-seed dfs accuracies 4.8/31.4/16.2\% learned vs.\
5.7/32.4/10.5\% MRV; \texttt{results/policy\_grid\_c30\_seed*.json}). Criterion 2 fails
in its strict form even where branching is load-bearing: ordering quality matters, the
learned head adds nothing over MRV. Training-seed variance in this regime is large (5--32\%
across seeds at fixed data), and all boundary numbers should be read with that spread. \textbf{Second}, the solvable region ends between $c{=}3.25$ and
$c{=}3.5$: accuracy declines 36.2\% $\to$ 27.6\% and then hits zero while the
unsatisfiable fraction is still 7\%, far below our measured 50\% crossover (between 4.0
and 4.5), let alone the
asymptotic threshold. Instances at all densities in the sweep are decided by a plain MRV
backtracking solver at a measured median of 0.66--1.31\,ms
(maximum 24\,ms, at $c{=}3.0$; \texttt{results/classical\_reference.json}), yet the
learned system solves none beyond $c{=}3.25$. The neural amortizer's boundary is not the classical solver's boundary;
instances trivial for backtracking are already unlearnable at this scale and budget.
\textbf{Third}, the failure is not a search failure: the deduction objective itself
provably carries no elimination signal where the work begins.

\begin{proposition}[No depth-0 elimination signal under value symmetry]
\label{prop:symmetry}
Let a CSP be value-symmetric (every permutation of the value alphabet maps solutions to
solutions; graph coloring is the canonical case) and satisfiable, with the same full value
alphabet available at every variable at depth zero, and let $\mathcal{S}$
denote its \emph{full} solution set. Then every (variable, value) pair participates in some
solution, so the depth-0 abstraction target computed from the full set,
$\hat y = \alpha(\mathcal{S})$, is all-ones.
\end{proposition}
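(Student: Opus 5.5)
The plan is a direct orbit argument using satisfiability and the value symmetry; nothing beyond the definitions of Eq.~\ref{eq:alpha} and of $\alpha$ in \S\ref{sec:background} is needed. Fix an arbitrary variable $i$ and value $v$ from the common alphabet $D$. I will exhibit a solution $s'\in\mathcal{S}$ with $s'(i)=v$. Then $v \in \alpha(\mathcal{S})(i)=\{s(i): s\in\mathcal{S}\}$ for every pair $(i,v)$, which is exactly the all-ones claim.

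\textbf{Construction.} By satisfiability, pick any $s\in\mathcal{S}$, and let $u=s(i)\in D$. Take $\pi$ to be the transposition of $u$ and $v$ on $D$, or the identity if $u=v$. Define $s'=\pi\circ s$. By value symmetry, $s'$ is again a solution. Also $s'(i)=\pi(u)=v$, as required.

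The only care needed is that $\pi\circ s$ is a legitimate assignment, i.e.\ that $\pi(s(j))$ lies in the domain of every variable $j$. This is where the hypothesis that the same full alphabet is available at every variable at depth zero is used. With a common domain $D$, any permutation of $D$ maps assignments to assignments. Without it, for example in a clue-restricted or padded-slot state, the permuted value could fall outside a variable's domain. The statement deliberately excludes that case.

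\textbf{Linking to the target.} At depth zero the lattice state $x$ is all-ones, so every solution is consistent with $x$. The target of Eq.~\ref{eq:alpha} is therefore $\hat y = x\meet\alpha(\mathcal{S})=\alpha(\mathcal{S})$, and this is all-ones by the construction above. I expect no real obstacle here. The main point to state carefully is that the claim concerns the \emph{full} solution set: with a sampled subset of $K$ solutions, as in the boundary study's training targets, the same argument only shows that each pair is in the support of the symmetrized solution distribution. It does not show that each pair is realized in a finite sample.
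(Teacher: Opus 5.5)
Your proof is correct and is essentially the paper's own argument: take any solution $s$, apply the transposition of $s(i)$ with $v$ (value symmetry keeps the result a solution), and conclude that $v \in \alpha(\mathcal{S})(i)$. Your extra remarks only make explicit what the paper's two-line proof leaves implicit: the identity case $u=v$, where the common-alphabet hypothesis is used, and the identity $\hat y = x \meet \alpha(\mathcal{S}) = \alpha(\mathcal{S})$ at the all-ones depth-0 state.
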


\begin{proof}
Take any solution $s$ and any pair $(v, c)$. The transposition of $c$ with $s(v)$ is a value
permutation, hence maps $s$ to a solution $s'$ with $s'(v) = c$.
\end{proof}

\noindent\emph{Remark.} The exact target thus supplies no candidate-elimination labels at
depth zero: a head trained toward it has nothing to learn before the first commitment.

That proposition concerns the exact target; training in this section uses $K$
\emph{sampled} solutions $Y_K \subseteq \mathcal{S}$, and
$\alpha(Y_K) \subseteq \alpha(\mathcal{S})$: every (variable, value) pair the sample misses
becomes a spurious elimination label at positions where the exact signal keeps
everything. Finite sampling does not dilute the depth-0 signal, it
inverts it, a distinction the crossing arms below measure directly (denser sampling gives
the largest single-axis gain, and the scaled model's eliminations are noise-capped). Sudoku never
exposes any of this: clues break the value symmetry and seed forced structure, and its
unique-solution supervision uses the complete solution set, so its targets are exact.
From-scratch CSP is a \emph{generation} problem, and a deduction-supervised propagator has
nothing to amortize at its start. Proposition~\ref{prop:symmetry} generalizes to any
value-symmetric CSP solved from scratch, and it is the conceptual counterpart of
Observation~\ref{obs:licensing}: soundness frees the search layer, and value symmetry
removes the depth-0 deduction signal that amortization would need to start from. Where the wall sits relative to resources, and which
axis moves it (inference budget, training scale, supervision quality, value calibration), is
measured next.

\paragraph{Crossing attempts: denser supervision gives the largest gain among the tested one-axis interventions.}
Holding the task fixed at the first dead point ($c{=}4.0$, one seeded instance set), we move
one resource axis at a time (\texttt{results/boundary\_cross.json}; the density sweep is
\texttt{results/phase\_transition*.json}):

\begin{center}
\small
\begin{tabular}{llrr}
\toprule
arm & axis moved & restart \% & dfs \% \\
\midrule
A & none (baseline: $d{=}64$, 4k steps, $K{=}12$) & 0.0 & 0.0 \\
B & 12$\times$ inference budget (32$\times$240 vs.\ 16$\times$40), same checkpoint & 1.9 & 0.0 \\
G & learned conflict signal disabled & 0.0 & 0.0 \\
C & $\sim$10$\times$ training compute ($d{=}128$, 12k steps) & 9.5 & 8.6 \\
D & 4$\times$ supervision density ($K{=}48$ sampled solutions) & \textbf{15.2} & \textbf{14.3} \\
E & no learning (stub propagator, exact search only) & 0.0 & 0.0 \\
\bottomrule
\end{tabular}
\end{center}

So the boundary is resource-soft, but the axes are far from equivalent. Search-side resources
do almost nothing: 12$\times$ budget buys 1.9\%, recalibrating termination buys nothing, and
the no-learning control scores zero, so the $c{=}3.0$ successes came from the learned
propagator. Training-side resources are what move it:
4$\times$ denser solution sampling alone lifts accuracy from zero to 15.2\% at unchanged
model and budget, and 10$\times$ compute to 9.5\%. Elimination diagnostics explain the
ordering: the baseline propagator learns almost nothing actionable (elimination recall 1.2\%
at precision 0.93), while the scaled model eliminates aggressively but against
$K$-sampled targets whose noise caps its usable precision. Both findings are corollaries of
Proposition~\ref{prop:symmetry} and its sampling remark: the exact deduction signal exists
only in committed (deep) states, and the sampled target additionally injects false
elimination labels wherever the sample fails to cover the true solution set.
Among the interventions tested in this single-instance, single-seed probe, denser
solution sampling produced the largest gain (15.2\%), followed by increased training
compute (9.5\%), while additional inference search produced little improvement. This
points to supervision coverage as a major constraint in this regime, but does not
establish it as the unique binding factor; an exact-supervision design that removes the
sampling bottleneck entirely is the
subject of follow-up work.

This section places our results inside a live debate: physics-inspired GNN solvers for
combinatorial problems \citep{schuetz2022combinatorial} versus the critique that they
underperform trivial classical heuristics \citep{angelini2023modern}. The boundary curve
sides with the critics for from-scratch random CSPs and identifies the mechanism
(no depth-0 elimination signal plus miscalibrated sampled-solution value labels),
and the Sudoku-family results show where the same architecture \emph{does} work:
clue-rich completion, where amortized propagation does all the work. Out of this comes a
division of labor we develop in follow-up work: exact propagation and
verification from classical solving, learned components only for the decisions classical
solvers already delegate to heuristics \citep{bengio2021ml,selsam2019neurocore}, with
survey-propagation style message passing \citep{mezard2002analytic} as the natural
learned-decimation target at scale.

\section{Related work}
\label{sec:related}

\textbf{Do networks learn search?} Searchformer \citep{lehnert2024searchformer} and Stream
of Search \citep{gandhi2024sos} train transformers on serialized search traces and ask
whether search behavior is internalized; our question is the complement, whether a solver
\emph{built} around search uses it, and the answer here is that amortization displaces it.
Deep-thinking recurrent networks \citep{schwarzschild2021learn,bansal2022endtoend} are the
counterpoint on the recurrent side: trained for extrapolation, their iterations do carry
computation, which matches our boundary finding that iteration earns its keep only where
one-shot amortization \citep{gershman2014amortized} fails.

\textbf{Recurrent tiny reasoners.} HRM \citep{wang2025hrm} introduced the two-timescale
recurrent architecture; TRM \citep{jolicoeur2025trm} showed that one tiny recursive network
suffices; GRAM \citep{baek2026gram} made the recursion stochastic with variational guidance
and trajectory-ranking heads; LDT \citep{davis2026ldt} grounded the recursion in a sound
candidate-set lattice. \colt{} is a synthesis: LDT's envelope, GRAM's
learned-search-as-distribution perspective, and a CDCL-inspired control structure
(chronological, leaf-level; Limitations~iii).

\textbf{Neural CSP and Sudoku solvers.} NeuroSAT \citep{selsam2019neurosat} learned SAT
solving end-to-end via message passing but with no soundness guarantee; Recurrent Relational
Networks \citep{palm2018rrn} reached 96.6\% on the hardest published Sudoku split in 2018,
which is why none of our accuracy numbers are presented as capability news; SATNet
\citep{wang2019satnet} differentiates through an SDP relaxation of MAXSAT; RUN-CSP
\citep{toenshoff2021runcsp} trains message passing for max-CSP. None of these abstain or
guarantee soundness, the axis this paper studies; the anatomy question here is what the
sound, abstaining variant of this model
class computes. Modern CDCL solvers \citep{silva1999grasp,moskewicz2001chaff} are
sound and complete but hand-coded; \colt{} occupies the middle: learned propagation and
ordering, exact emission gate, classical search skeleton.

\textbf{Learned heuristics inside exact solvers.} Learning the variable-ordering or
branching heuristic of an otherwise exact solver is an established program: learned
branching for MIP \citep{khalil2016learning}, GCNN imitation of strong branching
\citep{gasse2019exact}, neural variable selection in SAT \citep{selsam2019neurocore}, GNN
initialization of CDCL branching \citep{wang2024neuroback,eriksson2026ranking} (helpful,
but regime-dependent and eventually overridden by the solver's dynamic heuristics), and
the survey of \citet{bengio2021ml}. Our negative result is complementary: in the
verify-or-abstain lattice regime the learned-search slot is accuracy-inert because the
propagator amortizes the whole solution; the learned-heuristic program applies where search
actually iterates, which our boundary section (\S\ref{sec:boundary}) and follow-up work
target.

\textbf{Graph-structured attention.} The relational bias follows Graphormer
\citep{ying2021graphormer}; applying it to the \emph{constraint} graph of a CSP makes
the checkpoint geometry-free. Policy/value-guided search echoes AlphaZero
\citep{silver2018alphazero}; the difference is that our policy and value targets
are obtained directly from the lattice supervision without self-play (Eq.~\ref{eq:alpha},
Eq.~\ref{eq:pstar}); they are exact when the represented solution set is complete and
sampled approximations otherwise (\S\ref{sec:boundary}). Neural algorithmic reasoning
\citep{velickovic2021nar} pursues the same end (size-invariant learned executors of
algorithmic computations) from the trajectory-supervision side.

\textbf{Test-time augmentation.} The union symmetry-frame ensemble of \S\ref{sec:amortized}
is test-time augmentation \citep{shanmugam2021tta} over an exact symmetry group; the
mechanism-specific ingredient is the \emph{union} aggregation, chosen because the failure
mode is one-sided (false eliminations), where mean-aggregation TTA underperforms it by
construction and in measurement (100\% vs.\ 97.8\%).

\section{Limitations}
\label{sec:limitations}

(i) The 6$\times$6 headline, the 9$\times$9 $\pm$augmentation contrast, the hard-slice
anatomy, and the $c{=}3.0$ policy grid are reported over three training seeds; the
remaining cells (the 2$\times$2 factorial, transfer probes, other boundary densities) are
single-seed, with their repetition in the revision protocol (E3). (ii) Two of our
three search components were \emph{accuracy-inert} at 6$\times$6 (the policy head
everywhere, DFS/nogoods beyond efficiency) because the propagator either solves a puzzle
nearly alone or first-pass-poisons it (\S\ref{sec:anatomy}); the components' accuracy value,
if any, must be demonstrated in regimes with long sound search chains, which our 6$\times$6
data does not provide. (iii) Our conflict-driven learning is structurally limited by the
propagator being a black box: leaf-level nogoods (complete, verifier-rejected grids) are the
strongest \emph{sound} nogoods such a system admits, because lifting a conflict to a general
clause requires an explanation of which inferences produced it, and a neural elimination
offers none. CDCL-style conflict analysis therefore becomes available once
propagation is exact and explainable, as in lazy clause generation
\citep{ohrimenko2009lcg}; this is a design argument, not deferred work. (iv) Generated
unique-solution Sudoku is not the
Sudoku-Extreme corpus; cross-paper number comparisons are out of scope.
Moreover, the 6$\times$6 tier admits no symmetry-disjoint split at all (the domain has on
the order of fifty essential grids; leakage audit, \S\ref{sec:protocol}), so 6$\times$6
generalization is within-symmetry-class by construction and class-level generalization
rests entirely on the 9$\times$9 splits, which the audit shows are clean. (v) The
GRAM baseline is budget-starved by 10--100$\times$ relative to its published recipe
(\S\ref{sec:protocol}). (vi) The
multi-solution ($K>1$) path of the $\alpha$-operator is now exercised on graph coloring
(\S\ref{sec:coloring}) but at modest instance size; Maze-style tasks with $K$ in the hundreds
remain untested. (vii) The global heads (conflict, policy) do not transfer zero-shot across
board sizes (AUC 0.50); only the relational propagator does, and size-invariant global
readouts are an open architectural item.

\section{Reproducibility}
All numbers are produced by committed commands; datasets are deterministic with pinned MD5s;
the protocol, budgets, and success criteria were frozen before training
(\texttt{BENCHMARKS.md}); baselines are our public reimplementations
\citep{komissarov2026gram_reimpl,komissarov2026ltd_reimpl} on byte-identical data. One
loop over two CLI flags reproduces the full ablation grid. This version's state
is pinned by the immutable Git tag \texttt{r14-2026-07-15} of the
public repository;
all tables are regenerable from the committed raw JSONs in \texttt{results/}
(\texttt{scripts/make\_tables.py}), and the train/test splits carry the leakage audit of
\S\ref{sec:protocol} as a committed artifact. \emph{Independent verifier audit:} a
clean-room verifier, implemented without importing the solver or evaluation verifier code,
independently checked 25{,}231 valid and invalid cases with zero disagreements
(\texttt{results/verifier\_audit.json}; scope and method in \texttt{AUDIT.md}). The 6$\times$6 numbers of
\S\ref{sec:results}--\ref{sec:amortized} all come from one checkpoint evaluated in one
environment (\texttt{scripts/run\_canonical\_cpu.sh}). The canonical checkpoint is the
frozen protocol's original seed, 42, retrained with the committed recipe; it was
designated by that protocol before any evaluation, not selected among retrains, and all
three seeds are reported alongside it (its standard-slice count moved from 178/180 in the
GPU era to 180/180 here, within the reported seed spread of 178--180); the prospectively specified E4
check verified the two first-pass probe code paths bitwise identical on it
(\texttt{results/reconcile\_anatomy\_h2.json}). Cross-environment replication: GPU-era
runs of the same recipe reproduce the standard-slice count within two puzzles (178 vs.\
180 of 180), the hard-slice count within one (138 vs.\ 137), and the poisoning dichotomy
exactly, with 3--4 margin-borderline puzzles shifting between environments (committed
artifacts \texttt{colt6\_*.json}, \texttt{anatomy6hard.json},
\texttt{h2\_colt6hard\_union.json}). The 9$\times$9 arm now spans four environments: the original GPU run (173/180), an
independent same-recipe retrain on a second GPU host (173/180), and three fresh seeds on a
third host (174/173/174 of 180), whose checkpoints re-evaluated bit-identically after a
mid-experiment pod replacement (a fourth host and a different torch version).

\section*{Acknowledgements}
This work began as a discussion during a break at the HAICON26 conference and was carried
out over the course of the conference itself. We thank Leonid Didukh for that discussion.

\section*{AI-Assisted Research Statement}
This work was carried out end to end by AI systems: Claude Opus 4.8 and Claude Fable 5
(Anthropic), operating as autonomous research agents under the direction of the human
author, who set the research goals, supplied and approved the compute resources, made the
scope and go/no-go decisions, and reviewed the outputs. The agents wrote all code, designed
and froze the experimental protocol, executed and monitored all training and
evaluation runs, performed the analyses, including the decision to pursue the failure-anatomy
line, and drafted this manuscript. The initial study was completed in roughly two working
days; the revision experiments reported in this version added multi-seed training, budget
sweeps, boundary runs, and independent verification across several CPU and GPU hosts, with
run logs and configurations committed in the repository. The human author takes
responsibility
for all published claims. Reliability is the main epistemic concern with agent-produced
research, so we state what was independently checked: the human
author reviewed the manuscript, the headline numbers against the raw JSONs, and the
go/no-go decisions at each phase, but did not review all code line by line; the exact
emission gate is covered by unit tests and was independently cross-checked against a
clean-room second verifier on 25{,}231 cases with zero disagreements; its correctness is
additionally implied by each emitted grid passing independent re-verification at
evaluation time; tables are regenerated from committed artifacts by script; and the one
cross-host retrains reproduced the headline counts within the reported seed spreads
(see Reproducibility). Residual risk concentrates where review was thinnest, the analysis
scripts, which is one reason the revision protocol re-runs the central probes in a single
environment. The frozen protocol, commit history, and run logs in
the repository form a complete audit trail of the agents' decisions, including the
frozen criteria that failed and are reported as failures.

\bibliographystyle{plainnat}
\bibliography{refs}

\appendix
\section{Claim audit: each causal claim and its instrument}
\label{app:audit}

This appendix lists each causal claim in the paper, the measurement that supports it, and
the committed artifact holding the raw numbers. The two non-empirical claims are
Observation~\ref{obs:licensing} (definitional) and Proposition~\ref{prop:symmetry} (proved).

\begin{center}
\footnotesize
\begin{tabular}{p{0.34\textwidth}p{0.33\textwidth}p{0.26\textwidth}}
\toprule
\textbf{claim} & \textbf{instrument} & \textbf{artifact (\texttt{results/})} \\
\midrule
\colt{} 98.9--100\% over seeds; +49.5--50.6\,pp vs.\ the fixed historical baseline & frozen-budget grid, 3 seeds, one environment & \texttt{colt6cpu\_*.json}, \texttt{ablate6\_F\_*.json} \\
within-codebase graph bias suffices for full-\colt{} performance & E8: six-arm single-component ablation, 3 seeds, one env. & \texttt{ablate6\_*.json} \\
search components accuracy-inert & flat 2$\times$3 grids on two slices + budget sweep (5/15/60 rounds) & \texttt{colt6cpu\_*.json}, \texttt{colt6cpu\_hard\_budget*.json} \\
backtracking + nogoods cut waste 1{,}497$\times$ & suppressed-completion counters, matched budgets & \texttt{colt6cpu\_hard\_*.json} \\
failure $\Leftrightarrow$ first-pass poisoning & perfect contingency on all seeds (43/43, 49/49, 38/38 vs.\ 0 clean failures) & \texttt{anatomy6hard\_cpu*.json} \\
probes bitwise-identical; GPU-era runs replicate the dichotomy & E4 bitwise check; earlier committed artifacts & \texttt{reconcile\_anatomy\_h2.json}, \texttt{anatomy6hard.json} \\
poisoning is confident, not borderline & $\theta_{\text{elim}}$ invariance sweep & \texttt{reconcile\_anatomy\_h2.json} \\
one-shot amortization & first-pass singleton / commit-precision rates & \texttt{h1\_*.json} \\
train-time cure (augmentation) & 9$\times$9 $\leq$0.6\% $\to$ 96.5 $\pm$ 0.3\% at fixed budget, 3 seeds & \texttt{colt9\_*\_s4*.json} \\
test-time cure; union is the active ingredient & $K$-frame sweep, union vs.\ mean ablation & \texttt{h2\_colt6cpu\_hard\_*.json} \\
structure $\times$ augmentation interaction & 2$\times$2 factorial at matched budget & \texttt{eval\_ldt6aug.json}, \texttt{colt6aug\_*.json} \\
local propagator transfers; global heads do not & zero-shot 9$\times$9 elimination P/R, AUC, policy margin & \texttt{transfer9\_multi.json} \\
boundary curve; denser supervision gives the largest gain among tested crossing interventions & density sweep + one-axis-at-a-time crossing arms & \texttt{phase\_transition*.json}, \texttt{boundary\_cross.json} \\
positional tables reach their ceiling at 10$\times$ steps & budget sweep 5k/15k/50k, 3 seeds at 50k & \texttt{armA\_*.json}, \texttt{budget\_sweep\_*.json} \\
poisoning is value-specific; wrapper alone inert & frame-type factorial with identity predicted-null & \texttt{h2\_factorial\_cpu.json} \\
learned policy $\approx$ MRV where branching matters & full policy grid at $c{=}3.0$, 3 seeds & \texttt{policy\_grid\_c30\_*.json} \\
transfer is mostly relearned propagation & rule-based floor + beyond-propagation precision/recall & \texttt{transfer9\_floor.json} \\
verifier independently validated & clean-room second verifier, 25{,}231 cross-checks & \texttt{verifier\_audit.json} \\
classical solver closes all test sets without search & per-instance timing and decision counts & \texttt{classical\_reference.json} \\
no train--test leakage at 9$\times$9; 6$\times$6 is within-class by domain & symmetry-exact orbit audit, full group enumerated & \texttt{leakage\_audit.json} \\
cross-host reproducibility & multi-host, multi-seed replication within reported spreads & \texttt{colt9\_*\_s4*.json}, GPU-era artifacts \\
\bottomrule
\end{tabular}
\end{center}

\clearpage
\section{Hyperparameters and budgets}
\label{app:hyper}

All Sudoku-family models in \S\ref{sec:results}--\ref{sec:phase4} share one recipe
(\texttt{colt6.yaml} and \texttt{colt\_multi.yaml} in \texttt{configs/}); the boundary study
(\S\ref{sec:boundary}) states its own deviations inline.

\begin{center}
\small
\begin{tabular}{llll}
\toprule
\textbf{model} & & \textbf{training} & \\
\midrule
$d_{\text{model}}$ / heads / layers & 64 / 4 / 4 & optimizer & AdamW, lr $3{\cdot}10^{-3}$ \\
recurrent iterations per pass $L$ & 10 & weight decay / grad clip & 0.1 / 1.0 \\
FF multiplier / $V_{\max}$ padding & 4 / 9 & betas & (0.9, 0.95) \\
9$\times$9 (Phase 4) scale-up & $d{=}128$, $L{=}16$ & batch / steps (6$\times$6) & 64 / 5{,}000 \\
 & & batch / steps (9$\times$9) & 256 / 20{,}000 \\
\midrule
\textbf{lattice / solver} & & \textbf{loss} & \\
\midrule
$\theta_{\text{elim}}$ & 0.1 & elimination BCE $w_{+}/w_{-}$ & 4.0 / 0.5 \\
$\theta_{\text{cls}}$ (train / eval) & 0.9 / 0.6 & $\lambda_{\text{cls}}$ / $\lambda_{\text{ce}}$ / $\lambda_{\text{policy}}$ & 0.1 / 0.2 / 0.25 \\
branch temperature $\tau$ & 1.5 & curriculum reveal max / anneal & 0.9 / 0.7 \\
pool age limit $\tau_{\text{age}}$ & 60 (100 at 9$\times$9) & warmup fraction & 0.03 \\
\bottomrule
\end{tabular}
\end{center}

A ``round'' at inference is one forward pass of all chains followed by threshold
elimination, conflict handling, and (if no cell changed) one branching commitment per
chain; the frozen budgets are 32 chains $\times$ 60 rounds (4$\times$4, 6$\times$6) and 64
chains $\times$ 200 rounds (9$\times$9). Batch size at 9$\times$9 is 256 where the LDT
paper's recipe uses 512: full-unroll BPTT at batch 512 exceeds a 24\,GB card, and this is
the reproduction's documented deviation alongside the omitted augmentation in the baseline
row.

\end{document}